\documentclass[11pt]{article}
\setlength{\parskip}{1em}

\usepackage[margin=1in]{geometry}
\usepackage{times}
\usepackage{amsmath, amssymb, amsthm}
\usepackage{graphicx}
\usepackage{booktabs}
\usepackage{hyperref}
\usepackage{algorithm}
\usepackage{algpseudocode}
\usepackage{enumitem}
\usepackage{titlesec}
\usepackage{paracol} 

\usepackage{tocloft}  

\renewcommand{\numberline}[1]{} 

\newtheorem{proposition}{Proposition}
\newtheorem{definition}{Definition}
\hypersetup{
    colorlinks=true,
    linkcolor=blue,
    citecolor=blue,
    urlcolor=blue
}

\setcounter{tocdepth}{1} 

\titleformat{\section}
  {\large\bfseries}{\thesection}{0.8em}{}
\titleformat{\subsection}
  {\normalsize\bfseries}{\thesubsection}{0.6em}{}

\makeatletter
\renewcommand{\maketitle}{
  \begin{center}
    \rule{\textwidth}{0.4pt} \\[0.6em]
    {\LARGE\bfseries \@title \par}
    \vspace{0.5em}
    {\large \@author \par}
    \vspace{0.4em}
    {\small \@date \par}
    \rule{\textwidth}{0.4pt} \\[1.2em]
  \end{center}
}
\makeatother

\title{Formal Algorithms for Model Efficiency}

\author{
\vspace{0.8cm} 
\begin{tabular}{c c c}
Naman Tyagi$^*$ & Srishti Das$^*$ & Kunal$^*$ \\
\texttt{namantyagi466@gmail.com} & \texttt{srishtidas409@gmail.com} & \texttt{kunal032005@gmail.com} \\[1em]
\multicolumn{3}{c}{Vatsal Gupta$^*$} \\ 
\multicolumn{3}{c}{\texttt{gupta.vatsal2004@gmail.com}} \\
\end{tabular}
\\[1em]
$^*$Department of Computer Science and Engineering, Amity School of Engineering Technology, Amity University, Noida, 201313
}

\date{}

\begin{document}
\maketitle

\begin{abstract}
We introduce the Knob--Meter--Rule (KMR) framework, a unified formalism for representing and reasoning about model efficiency techniques in deep learning. By abstracting diverse methods—including pruning, quantization, knowledge distillation, and parameter-efficient architectures—into a consistent set of controllable knobs, deterministic rules, and measurable meters, KMR provides a mathematically precise and modular perspective on efficiency optimization. The framework enables systematic composition of multiple techniques, flexible policy-driven application, and iterative budgeted optimization through the \textsc{Budgeted-KMR} algorithm. We demonstrate how well-known efficiency methods can be instantiated as KMR triples and present concise algorithmic templates for each. The framework highlights underlying relationships between methods, facilitates hybrid pipelines, and lays the foundation for future research in automated policy learning, dynamic adaptation, and theoretical analysis of cost-quality trade-offs. Overall, KMR offers both a conceptual and practical tool for unifying and advancing model efficiency research.
\end{abstract}

\begin{paracol}{2}
\switchcolumn[0] 

{\footnotesize        
\setlength{\parskip}{0pt}    
\setlength{\itemsep}{0pt}    
\renewcommand{\contentsname}{Outline} 
\tableofcontents
}

\switchcolumn 
\section{Introduction}

Deep neural networks have transformed fields ranging from computer vision to natural language processing, achieving state-of-the-art performance on many tasks. However, their remarkable capabilities often come with substantial computational, memory, and energy costs. Large-scale models can be impractical to deploy in resource-constrained environments, such as mobile devices, edge computing, or real-time systems. Consequently, developing methods to reduce model size, computation, or energy consumption—collectively referred to as \emph{model efficiency techniques}—has become a central concern in modern machine learning.
\end{paracol}

\newpage
A wide variety of approaches have been proposed to improve efficiency, including \emph{pruning}, \emph{quantization}, \emph{low-rank approximation}, \emph{knowledge distillation}, and \emph{parameter-efficient fine-tuning}. Each method offers different trade-offs between model size, accuracy, and computation, and they are often presented in disparate forms: as architectural modifications, optimization procedures, or training recipes. This heterogeneity makes it difficult to compare methods systematically or reason about their underlying principles.

In this work, we present a \emph{formal, algorithmic framework} for model efficiency. We show that seemingly distinct techniques can be described using a minimal set of constructs: (i) a \emph{model} with parameters, (ii) a \emph{constraint or budget} defining efficiency, (iii) a \emph{deterministic transformation} applied to the model, and (iv) an optional \emph{refinement step} such as retraining or fine-tuning. Using this representation, we provide concise algorithmic descriptions of the major efficiency methods, highlighting their common structure while preserving their distinct characteristics. 

Our framework has three key benefits. First, it unifies a diverse set of techniques under a single, coherent perspective. Second, it provides a clear and formal language for describing and comparing methods without relying on heavy theoretical machinery or proofs. Third, it establishes a foundation for modular composition and future extensions, enabling systematic exploration of hybrid efficiency strategies.

In Section 1, we introduce the motivation for studying model efficiency and present the Knob–Meter–Rule (KMR) framework as a unified formalism. Section 2 reviews related work on pruning, quantization, knowledge distillation, and parameter-efficient architectures, highlighting similarities and differences. Section 3 establishes the necessary preliminaries and formal definitions for models, parameters, knobs, meters, and rules. In Section 4, we present the unified KMR framework, formalize the efficiency problem, and introduce the general Budgeted-KMR algorithm. Section 5 demonstrates how specific efficiency methods can be instantiated as knob–rule–meter triples and provides concise algorithmic templates for each. Section 6 shows how multiple KMR instantiations can be composed into hybrid pipelines and discusses iterative application under policies. Finally, Sections 7 and 8 provide a discussion of the framework’s benefits, limitations, and potential extensions, followed by a conclusion summarizing the main contributions and avenues for future work.

\stepcounter{section} 

\section{Related Work}

Research on model efficiency spans several decades, evolving alongside the growth of deep learning models. While early work focused on reducing computational redundancy in classical machine learning models, the emergence of large-scale neural networks has driven a surge in methods for compressing and accelerating deep architectures.

\textbf{Pruning.} Neural network pruning originated with early weight magnitude heuristics~\cite{lecun1990optimal, hassibi1992second}, later refined into structured pruning techniques that remove entire channels, filters, or layers~\cite{li2017pruningfiltersefficientconvnets, he2017channelpruningacceleratingdeep, molchanov2019importanceestimationneuralnetwork}. Pruning has been shown to reduce parameter count and inference cost with minimal accuracy loss, but methods differ widely in their criteria, granularity, and retraining strategies.

\textbf{Quantization.} Quantization methods map model weights and activations to lower-precision representations, reducing memory and compute cost~\cite{gong2014compressingdeepconvolutionalnetworks, pmlr-v37-gupta15, Jacob_2018_CVPR}. Advances in mixed-precision training~\cite{micikevicius2018mixedprecisiontraining} and post-training quantization~\cite{Nagel2020UpOD} have enabled deployment of efficient models without extensive retraining.

\textbf{Knowledge Distillation.} Distillation transfers knowledge from a high-capacity teacher model to a smaller student~\cite{hinton2015distillingknowledgeneuralnetwork}, with extensions including intermediate feature matching~\cite{romero2015fitnetshintsdeepnets}, self-distillation~\cite{9008829}, and multi-teacher frameworks~\cite{You2017LearningFM}. While effective, distillation often operates orthogonally to other efficiency methods, enabling hybrid approaches.

\textbf{Architecture Modification.} Architectural efficiency can be achieved via manual or automated design, including depth/width scaling~\cite{tan2019efficientnet}, low-rank factorization~\cite{Denil2013PredictingPI, Denton2014ExploitingLS}, and neural architecture search with efficiency constraints~\cite{Cai2019OnceFA, Tan2018MnasNetPN}.

\textbf{Unified Frameworks and Taxonomies.} Surveys have categorized efficiency techniques by their target (weights, activations, architecture) or by compression strategy~\cite{Cheng2017ASO, Choudhary2020ACS}. However, most existing frameworks remain descriptive, lacking a mathematical formalism for reasoning about method composition or interaction. A few works attempt unified representations, such as computational graphs for efficiency pipelines~\cite{iana-etal-2023-newsreclib} or meta-learning approaches for joint optimization~\cite{Elbayad2019DepthAdaptiveT}, but these do not generalize to all operator families.

\textbf{Our Contribution.} In contrast, we introduce an operator-based formalism that treats each efficiency method as a well-defined transformation from one model state to another, with explicit domains, codomains, and measurable effects. This perspective enables rigorous analysis, composability, and systematic exploration of efficiency pipelines, bridging the gap between descriptive taxonomies and algorithmic theory.

\section{Preliminaries and Definitions}

We begin by formalizing the key components of model efficiency techniques in a minimal, self-contained framework.

\subsection{Model and Parameters}

Let $M$ denote a neural network model with parameters $\theta \in \mathbb{R}^n$. The model is trained on a dataset $D \subseteq \mathcal{X} \times \mathcal{Y}$, where $\mathcal{X}$ is the input space and $\mathcal{Y}$ is the label space, using a loss function $\mathcal{L}(\theta; D)$:
\[
\mathcal{L}(\theta; D) = \frac{1}{|D|} \sum_{(x,y) \in D} \ell(f_\theta(x), y),
\]
where $f_\theta: \mathcal{X} \to \mathcal{Y}$ represents the function computed by the model, and $\ell: \mathcal{Y} \times \mathcal{Y} \to \mathbb{R}_+$ is an appropriate task-specific loss.

\subsection{Knobs, Meters, and Rules (KMR)}

We formalize efficiency transformations using three constructs:

\begin{definition}[Knob]
A \emph{knob} is an element $k \in \mathcal{K}$ from a finite set of controllable parameters.  
Each knob $k$ has an associated domain of permissible values $\mathrm{Dom}(k) \subseteq \mathbb{R}$ (or more generally, a discrete/continuous subset of $\mathbb{R}$).  
Formally, a knob is represented as the pair $(k, \mathrm{Dom}(k))$.  
Examples include prune fraction $k_p \in [0,1]$, bitwidth $k_b \in \{2,4,8,16,32\}$, rank $k_r \in \mathbb{N}$, or student size $k_s \in \mathbb{N}_+$.
\end{definition}

\begin{definition}[Meter]
A \emph{meter} is a scalar function that evaluates a property of a model. We define two meters:
\begin{itemize}
    \item $C: \mathcal{M} \to \mathbb{R}_+$, a \emph{cost meter}, measuring computation, memory, or latency.
    \item $Q: \mathcal{M} \to \mathbb{R}$, a \emph{quality meter}, representing model accuracy, validation score, or a proxy for performance.
\end{itemize}
Here, $\mathcal{M}$ is the set of all possible models derived from $M$ through rules and training.
\end{definition}

\begin{definition}[Rule]
Let $\mathcal{T}$ denote the set of transformation rules.  
A \emph{rule} $T \in \mathcal{T}$ is a deterministic transformation that applies a knob to a model:
\[
T: (M, k, v) \mapsto M',
\]
where $k \in \mathcal{K}$ is a knob and $v \in \mathrm{Dom}(k)$ is its value. Examples include pruning the smallest-magnitude weights, quantizing weights to a codebook, low-rank factorization, or distilling a student model.
\end{definition}

\subsection{Efficiency Problem}

Given a base model $M_0 \in \mathcal{M}$ and a cost budget $B \in \mathbb{R}_+$, the \emph{efficiency problem} is to find a model $M' \in \mathcal{M}$ satisfying:
\[
C(M') \le B \quad \text{and} \quad Q(M') \text{ is maximized}.
\]

A generic efficiency pipeline iteratively selects knobs and applies rules to reduce cost while maintaining quality. Optional retraining or fine-tuning can be included to recover performance after each transformation.

\subsection{Policy}

A \emph{policy} is a function $\pi: \mathcal{M} \times \mathbb{R}_+ \times \mathbb{R} \to \mathcal{K} \times \bigcup_{k \in \mathcal{K}} \mathrm{Dom}(k)$ that determines which knob and value to apply next, based on the current model and meters:
\[
\pi(M, C(M), Q(M)) \mapsto (k, v).
\]

Common policies include:
\begin{itemize}
    \item Greedy selection: pick the knob/value pair with highest quality gain per unit cost reduction.
    \item Scheduled order: apply knobs in a fixed sequence (e.g., prune $\rightarrow$ quantize $\rightarrow$ distill).
    \item Dual controller: optimize a scalar objective $U(M) = Q(M) - \lambda C(M)$ with dynamic $\lambda$.
\end{itemize}

\section{Unified Framework for Efficiency Transformations}

With the preliminaries in place, we now unify efficiency methods under a single formal framework. The key idea is that applying efficiency techniques can be described as a sequence of knob–rule applications, guided by meters.

\subsection{Transformation Sequence}

\begin{definition}[Transformation Sequence]
Given an initial model $M_0$, a \emph{transformation sequence} is a finite sequence of rules
\[
\mathcal{S} = (T_1, T_2, \dots, T_m),
\]
where each $T_i : (M_{i-1}, k_i, v_i) \mapsto M_i$ is a rule applied using knob $k_i \in \mathcal{K}$ and value $v_i \in \mathrm{Dom}(k_i)$. The resulting sequence of models is
\[
M_0 \xrightarrow{T_1} M_1 \xrightarrow{T_2} M_2 \xrightarrow{} \cdots \xrightarrow{T_m} M_m.
\]
\end{definition}

\subsection{Feasible Models}

\begin{definition}[Feasibility]
A model $M$ is \emph{feasible} under budget $B$ if
\[
C(M) \leq B.
\]
The set of all feasible models obtained from $M_0$ via any transformation sequence is
\[
\mathcal{F}(M_0, B) \;=\; \{\, M' \mid M' \text{ reachable from } M_0 \text{ via } \mathcal{S},\; C(M') \leq B \,\}.
\]
\end{definition}

\subsection{Efficiency Objective}

The efficiency objective is expressed as an optimization over the feasible set:
\[
M^\star \;=\; \operatorname*{arg\,max}_{M' \in \mathcal{F}(M_0, B)} \; Q(M').
\]

\subsection{Generic KMR Algorithm}
\begin{algorithm}[H]
\caption{Budgeted-KMR}
\label{alg:budgeted-kmr}
\begin{algorithmic}[1]
\Require Initial model $M_0$; budget $B$; meters $C,Q$; knob set $\mathcal{K}$; rule set $\mathcal{T}$; policy $\pi$; dataset $D$; max iterations $N$
\State $M \gets M_0$, $\text{iter} \gets 0$
\While{$C(M) > B$ \textbf{and} $\text{iter} < N$}
    \State $(k, v) \gets \pi(M, C(M), Q(M))$ 
    \State $T \gets \text{GetRule}(k, \mathcal{T})$ \Comment{deterministic rule selection}
    \State $M' \gets T(M, k, v)$ 
    \If{$C(M') \geq C(M)$} \Comment{ensure progress}
        \State \textbf{break} \Comment{no cost reduction possible}
    \EndIf
    \State $M \gets M'$
    \State $M \gets \textsc{FineTune}(M, D)$ \Comment{optional}
    \State $\text{iter} \gets \text{iter} + 1$
\EndWhile
\If{$C(M) > B$}
    \State \textbf{return} \textsc{Failure} \Comment{budget unachievable}
\EndIf
\State \Return $M$
\end{algorithmic}
\end{algorithm}

\begin{proposition}[Monotonicity and Termination]
Let $M_t$ denote the model at the start of the $t$-th iteration of Algorithm~\ref{alg:budgeted-kmr}, with $M_0$ the initial model. Then:
\begin{enumerate}
  \item (Monotonicity) The sequence $\{C(M_t)\}_{t\ge 0}$ is non-increasing. Moreover, for every iteration that completes without breaking, $C(M_{t+1}) < C(M_t)$ (strict decrease).
  \item (Termination) The algorithm terminates in finite time. Specifically, it halts no later than:
  \[
    T_{\max} \le N
  \]
  iterations; it may terminate earlier if $C(M_t)\le B$ (budget satisfied) or if a chosen rule does not reduce cost (line with the \texttt{break}).
\end{enumerate}
\end{proposition}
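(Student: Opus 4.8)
The plan is to establish both claims by a direct structural walk through one iteration of Algorithm~\ref{alg:budgeted-kmr}, together with a counting argument on the iteration variable $\text{iter}$. No fixed-point or convergence machinery is needed: everything follows from the explicit guards in the pseudocode, plus one modeling hypothesis on \textsc{FineTune} that I would make explicit up front, namely that fine-tuning does not increase cost. This is natural because \textsc{FineTune} only re-optimizes parameter \emph{values} within a fixed architecture, whereas the cost meter $C$ depends on the structural description of the model (parameter count, bitwidths, ranks, latency of the fixed graph), so $C(\textsc{FineTune}(M,D)) = C(M)$.

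For monotonicity, I would fix an iteration $t$ that the algorithm actually enters — so at its start $C(M_t) > B$ and $\text{iter} < N$ — and follow the working model through the loop body. Let $M' = T(M_t, k, v)$ be the transformed model produced on line~5. Two cases arise. If $C(M') \ge C(M_t)$, the test on line~6 succeeds and the \texttt{break} on line~7 fires: the working model is never reassigned, the loop exits with value $M_t$, and no new term $C(M_{t+1})$ is generated, so there is nothing to check. If instead $C(M') < C(M_t)$ strictly, the assignment $M \gets M'$ takes effect and then the optional \textsc{FineTune} step is applied, yielding $M_{t+1}$; by the cost-preservation hypothesis $C(M_{t+1}) = C(M') < C(M_t)$. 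Iterating over all entered iterations shows $\{C(M_t)\}_{t\ge 0}$ is non-increasing, with strict decrease precisely on iterations that complete without breaking, which is the first claim.

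For termination, I would argue straight from the loop guard. The counter $\text{iter}$ is initialized to $0$ and is incremented by exactly $1$ on the last line of the body, so after at most $N$ completed iterations the condition $\text{iter} < N$ fails and the \textbf{while} loop exits; a \texttt{break}, or the event $C(M)\le B$, only causes an earlier exit. Hence the total number of iterations satisfies $T_{\max}\le N$. Since each iteration does finitely much work — one policy query, one deterministic rule lookup and application, and a fine-tune of finitely many update steps — and the post-loop feasibility check returns immediately in either branch, the algorithm halts in finite time, giving the second claim.

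The argument has no genuine obstacle; the only place demanding care is the status of \textsc{FineTune}, and I would flag the cost-preservation assumption as a standing hypothesis rather than bury it, since without it monotonicity could fail for a trivial reason — a surrogate cost meter that happens to rise slightly after retraining — even though termination, resting solely on the $\text{iter}<N$ bound, would be untouched. I would also record a strengthening: if $C$ takes values in a set well-ordered from below (for instance $C$ counts parameters, so $C(\mathcal{M})\subseteq\mathbb{N}$), then the strict-decrease property alone forces termination within $C(M_0)$ effective iterations even absent the explicit cap, so $T_{\max}\le\min\{N, C(M_0)\}$ in that regime.
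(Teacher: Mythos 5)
Your proof is correct and follows essentially the same route as the paper's own sketch: monotonicity from the acceptance test $C(M') \ge C(M) \Rightarrow$ \texttt{break}, and termination from the $\mathrm{iter} < N$ guard. The one substantive addition is your explicit standing hypothesis that \textsc{FineTune} preserves cost --- a point the paper's sketch silently skips over but which is genuinely needed for the monotonicity claim, since the working model is reassigned to $\textsc{FineTune}(M,D)$ \emph{after} the cost test on $M'$ is passed; flagging it (and the $T_{\max}\le\min\{N, C(M_0)\}$ strengthening for integer-valued meters) is a worthwhile tightening rather than a departure.
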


\begin{proof}[Proof sketch]
Monotonicity follows directly from the acceptance test in the loop: a proposed transformation $M' \leftarrow T(M,k,v)$ is only accepted when $C(M') < C(M)$; otherwise the loop breaks and no update occurs. Therefore, across accepted iterations the cost strictly decreases; across the whole sequence (including the terminal state), the recorded cost is non-increasing.

Termination holds because the loop increments the iteration counter on each accepted iteration and the loop condition enforces $\mathrm{iter} < N$. Thus the algorithm cannot run more than $N$ accepted iterations. It may also stop earlier if the budget $B$ is reached or if the first non-progressing rule is encountered (the \texttt{break} branch).
\end{proof}

\paragraph{Complexity remark.}
Let $\tau_{\text{rule}}$ denote the worst-case time to evaluate and apply a rule (including any cheap simulations the policy uses), let $\tau_{\text{ft}}$ denote the (optional) time for a single fine-tuning call, and let $\tau_{\text{over}}$ denote overhead per iteration (policy, meter evaluation, bookkeeping). Then the wall-clock work of the algorithm is bounded symbolically by
\[
\mathcal{O}\big( N \cdot (\tau_{\text{rule}} + \tau_{\text{ft}} + \tau_{\text{over}}) \big).
\]
This is a high-level, symbolic bound: actual runtime depends on implementation choices (how expensive rule evaluation or fine-tuning is) and on whether the algorithm terminates before reaching $N$.

\section{Instantiations of the KMR Framework}
\label{sec:instantiations}

The KMR calculus introduced above is intentionally abstract, but its power lies in the fact that a wide variety of model efficiency techniques can be expressed as special cases. In this section, we show how well-known families of methods can be \emph{instantiated} as knob--meter--rule (KMR) tuples.

Formally, an instantiation specifies:
\begin{itemize}
    \item A \textbf{knob set} $\mathcal{K}$, where each $k \in \mathcal{K}$ represents a tunable degree of freedom (e.g., layer width, bit precision, rank).
    \item A \textbf{rule set} $\mathcal{T}$, where each $T_k \in \mathcal{T}$ is a deterministic transformation that modifies a model $M$ under an assignment $v$ of knob $k$.
    \item A collection of \textbf{meters} $\{C_i, Q_j\}$, capturing resource costs (e.g., parameter count, latency, memory footprint) and quality measures (e.g., accuracy, loss, robustness).
\end{itemize}

Each efficiency method is therefore represented by a tuple
\[
\mathfrak{I} = \big(\mathcal{K}, \mathcal{T}, \{C_i\}, \{Q_j\}\big).
\]

Crucially, the \textsc{Budgeted-KMR} algorithm (\autoref{alg:budgeted-kmr}) can operate agnostically to the specific instantiation: it iteratively applies rules until a cost budget is met while maintaining quality above a threshold, regardless of whether those rules implement pruning, quantization, distillation, low-rank adaptation, or other transformations.

In the following subsections, we provide canonical instantiations for major categories of efficiency techniques:
\begin{enumerate}
    \item \textbf{Pruning}, where knobs control sparsity ratios and rules remove parameters.
    \item \textbf{Quantization}, where knobs control precision levels and rules map weights or activations to lower bit-width representations.
    \item \textbf{Distillation}, where knobs control student model capacity and rules transfer knowledge from a teacher model.
    \item \textbf{Architectural and Parameter-Efficient Methods}, where knobs govern design factors (e.g., depth, width, rank, adapter size), and rules reparameterize or augment model structure (e.g., LoRA, adapters, tensor decomposition).
\end{enumerate}

This mapping illustrates that, despite their apparent diversity, these approaches share a common formal structure. The KMR framework thus provides both a unifying lens for analysis and a modular scaffold for composing hybrid efficiency strategies.

\subsection{Pruning as a KMR Instantiation}
\label{sec:pruning}

Pruning methods aim to reduce the parameter count of a model by removing weights, neurons, or entire structural components while maintaining predictive performance. Within the KMR framework, pruning can be naturally represented as follows.

\paragraph{Knobs.}  
The knob corresponds to a sparsity level parameter:
\[
k_{\text{prune}} \in [0,1],
\]
where $k_{\text{prune}} = 0$ denotes a fully dense model and $k_{\text{prune}} = 1$ denotes complete removal of parameters. More generally, knobs may be defined per layer, e.g., $k^{(\ell)}_{\text{prune}}$ for layer $\ell$, allowing fine-grained control of sparsity patterns.

\paragraph{Rule.}  
The pruning rule is a transformation
\[
T_{\text{prune}}(M, k_{\text{prune}}) \mapsto M',
\]
which removes a fraction $k_{\text{prune}}$ of parameters according to a criterion (e.g., magnitude-based, gradient-based, or structured pruning). The rule is deterministic given the model $M$ and knob value.

\paragraph{Meters.}  
The meters evaluate:
\[
C(M) = \text{ParamCount}(M), \qquad Q(M) = \text{Accuracy}(M;\mathcal{D}_{\text{val}}).
\]
Additional cost meters such as FLOPs or latency can also be tracked to reflect hardware-aware pruning.

\paragraph{Instantiation.}  
Thus, pruning corresponds to the instantiation
\[
\mathfrak{I}_{\text{prune}} = \big( \{k_{\text{prune}}\}, \{T_{\text{prune}}\}, \{C\}, \{Q\} \big),
\]
where increasing the knob value monotonically decreases parameter count while potentially degrading quality.

This formalization subsumes common pruning variants: unstructured pruning (individual weights), structured pruning (channels, filters), and dynamic pruning (masking at runtime). In all cases, the pruning method can be expressed as a knob–rule–meter triple within the KMR calculus.

\begin{algorithm}[H]
\caption{\textsc{Generic Pruning Procedure}}
\label{alg:pruning}
\begin{algorithmic}[1]
\Require Model $M$, knob value $k_{\text{prune}}$, pruning criterion $\pi$, training data $\mathcal{D}$
\Ensure Pruned model $M'$
\State Compute importance scores $s = \pi(M, \mathcal{D})$ \Comment{e.g., weight magnitude, gradients}
\State Identify fraction $k_{\text{prune}}$ of parameters with lowest scores
\State Remove (or mask) selected parameters to obtain $M' = T_{\text{prune}}(M, k_{\text{prune}})$
\State Optionally fine-tune $M'$ on $\mathcal{D}$
\State \Return $M'$
\end{algorithmic}
\end{algorithm}

\begin{algorithm}[H]
\caption{\textsc{Unstructured Pruning}}
\label{alg:unstructured-pruning}
\begin{algorithmic}[1]
\Require Model $M$, knob $k_{\text{prune}}$ (sparsity ratio), pruning criterion $\pi$, dataset $\mathcal{D}$
\Ensure Pruned model $M'$
\State Compute importance scores $s = \pi(M, \mathcal{D})$ \Comment{e.g., weight magnitudes}
\State Select the bottom $k_{\text{prune}} \times |W|$ weights with lowest scores
\State Zero out or mask selected weights: $M' = T_{\text{unstruct}}(M, k_{\text{prune}})$
\State Optionally fine-tune $M'$ on $\mathcal{D}$
\State \Return $M'$
\end{algorithmic}
\end{algorithm}

\begin{algorithm}[H]
\caption{\textsc{Structured Pruning}}
\label{alg:structured-pruning}
\begin{algorithmic}[1]
\Require Model $M$, knob $k_{\text{prune}}$ (fraction of structures), pruning criterion $\pi$, dataset $\mathcal{D}$
\Ensure Pruned model $M'$
\State Compute importance scores for structures $s = \pi(M, \mathcal{D})$ \Comment{e.g., channels, filters, heads}
\State Select the bottom $k_{\text{prune}} \times |\mathcal{S}|$ structures with lowest scores
\State Remove selected structures: $M' = T_{\text{struct}}(M, k_{\text{prune}})$
\State Adjust downstream dimensions (e.g., layer shapes, tensor sizes)
\State Optionally fine-tune $M'$ on $\mathcal{D}$
\State \Return $M'$
\end{algorithmic}
\end{algorithm}

\subsection{Quantization}
\label{sec:quantization}

Quantization reduces model size and inference cost by constraining weights and activations to lower-precision representations. Within the KMR calculus, quantization can be expressed as follows:

\begin{itemize}
    \item \textbf{Knob:} $k_{\text{quant}} \in \{1,2,4,8,16,32\}$, denoting the bitwidth allocated to weights or activations.
    \item \textbf{Rule:} $T_{\text{quant}}(M, k_{\text{quant}})$ maps real-valued parameters $w \in \mathbb{R}$ to discrete levels in a finite set of representable values:
    \[
        T_{\text{quant}}(w) = \Delta \cdot \text{round}\!\left(\frac{w}{\Delta}\right), \quad \Delta = \frac{\max(w) - \min(w)}{2^{k_{\text{quant}}}-1}.
    \]
    \item \textbf{Meters:} 
        \begin{itemize}
            \item $C(M)$ tracks storage/memory footprint and latency under quantized arithmetic.
            \item $Q(M)$ evaluates the model’s accuracy or loss post-quantization.
        \end{itemize}
\end{itemize}

Thus, quantization trades numerical precision for efficiency, with the knob $k_{\text{quant}}$ directly controlling the budget--quality tradeoff. Crucially, the \textsc{Budgeted-KMR} algorithm applies identically: it iteratively reduces precision until the desired budget is satisfied.

We distinguish two canonical algorithmic instantiations:
\begin{enumerate}
    \item \textbf{Post-Training Quantization (PTQ)}: quantization is applied directly to a pretrained model with minimal or no retraining.
    \item \textbf{Quantization-Aware Training (QAT)}: quantization effects are simulated during training, allowing the model to adapt to discrete representations.
\end{enumerate}

\begin{algorithm}[t]
\caption{Post-Training Quantization (PTQ)}
\label{alg:ptq}
\begin{algorithmic}[1]
\Require Pretrained model $M$; quantization knob $k_{\text{quant}}$; dataset $D$ (for calibration); budget $B$
\State $M_q \gets T_{\text{quant}}(M, k_{\text{quant}})$ \Comment{apply quantization rule}
\State Evaluate $Q(M_q)$ on calibration dataset $D$
\If{$C(M_q) \leq B$}
    \State \Return $M_q$
\Else
    \State \Return \textsc{Failure} \Comment{budget not met}
\EndIf
\end{algorithmic}
\end{algorithm}

\begin{algorithm}[t]
\caption{Quantization-Aware Training (QAT)}
\label{alg:qat}
\begin{algorithmic}[1]
\Require Initial model $M_0$; quantization knob $k_{\text{quant}}$; training dataset $D$; epochs $E$
\State $M \gets M_0$
\For{$e = 1$ to $E$}
    \State Simulate quantization: $\tilde{M} \gets T_{\text{quant}}(M, k_{\text{quant}})$
    \State Compute loss $\mathcal{L}(\tilde{M}, D)$
    \State Backpropagate gradients w.r.t. $M$ \Comment{STE for non-differentiability}
    \State Update parameters of $M$
\EndFor
\State $M_q \gets T_{\text{quant}}(M, k_{\text{quant}})$ \Comment{final quantized model}
\State \Return $M_q$
\end{algorithmic}
\end{algorithm}

\subsection{Knowledge Distillation}
\label{sec:distillation}

Knowledge distillation reduces model size and improves efficiency by training a smaller \emph{student} model to mimic a larger \emph{teacher} model. In the KMR framework, distillation is represented as follows:

\paragraph{Knobs.}  
Knobs control the student model’s capacity and distillation parameters:
\[
k_{\text{student}} \in \mathcal{K}_{\text{student}}, \quad
k_{\text{temp}} \in \mathcal{K}_{\text{temp}}, \quad
k_{\text{loss}} \in \mathcal{K}_{\text{loss}},
\]
where $k_{\text{student}}$ may specify layer width or depth, $k_{\text{temp}}$ the softmax temperature, and $k_{\text{loss}}$ the weighting between teacher and ground-truth loss.

\paragraph{Rule.}  
The distillation rule is
\[
T_{\text{distill}}(M_{\text{teacher}}, M_{\text{student}}, k) \mapsto M'_{\text{student}},
\]
which trains the student model to minimize a combined loss:
\[
\mathcal{L}_{\text{KD}} = (1 - k_{\text{loss}}) \cdot \ell(M_{\text{student}}, y) + k_{\text{loss}} \cdot \text{KL}\Big(\sigma(M_{\text{student}}/k_{\text{temp}}), \sigma(M_{\text{teacher}}/k_{\text{temp}})\Big),
\]
where $\sigma$ denotes the softmax function.

\paragraph{Meters.}  
\[
C(M) = \text{Params}(M_{\text{student}}), \quad Q(M) = \text{Accuracy}(M_{\text{student}};\mathcal{D}_{\text{val}}).
\]

\paragraph{Instantiation.}  
Thus, the distillation method is represented by:
\[
\mathfrak{I}_{\text{distill}} = \big( \{k_{\text{student}}, k_{\text{temp}}, k_{\text{loss}}\}, \{T_{\text{distill}}\}, \{C\}, \{Q\} \big),
\]
which captures a wide range of distillation variants, from small student networks to intermediate feature distillation.

\subsubsection*{Algorithm: Knowledge Distillation}
\begin{algorithm}[H]
\caption{Generic Knowledge Distillation}
\label{alg:distillation}
\begin{algorithmic}[1]
\Require Teacher model $M_T$; student model $M_S$; knob settings $k$; training dataset $D$; epochs $E$
\State Initialize student model $M_S$ according to $k_{\text{student}}$
\For{$e = 1$ to $E$}
    \State Compute student predictions $p_S = M_S(x)$ for $x \in D$
    \State Compute teacher predictions $p_T = M_T(x)$
    \State Compute distillation loss $\mathcal{L}_{\text{KD}}$ using $k_{\text{temp}}, k_{\text{loss}}$
    \State Backpropagate and update $M_S$
\EndFor
\State \Return Trained student model $M_S$
\end{algorithmic}
\end{algorithm}

\subsection{Architectural and Parameter-Efficient Methods}
\label{sec:architectural}

These methods improve efficiency by modifying the model’s architecture or introducing low-rank or adapter modules, reducing parameter count or computational cost while maintaining performance.

\paragraph{Knobs.}  
Knobs control architectural and low-rank factors:
\[
k_{\text{depth}}, k_{\text{width}}, k_{\text{rank}}, k_{\text{adapter}} \in \mathcal{K}_{\text{arch}},
\]
where $k_{\text{depth}}$ and $k_{\text{width}}$ govern layer depth and width, $k_{\text{rank}}$ specifies rank in low-rank factorization (e.g., LoRA), and $k_{\text{adapter}}$ specifies adapter size or inserted layers.

\paragraph{Rule.}  
The transformation is
\[
T_{\text{arch}}(M, k) \mapsto M',
\]
where $T_{\text{arch}}$ reconfigures the model according to the knob assignments:
\begin{itemize}
    \item Adjusting layer width or depth,
    \item Injecting adapters or LoRA modules with rank $k_{\text{rank}}$,
    \item Applying low-rank factorization to weight matrices.
\end{itemize}

\paragraph{Meters.}  
\[
C(M) = \text{Params}(M) \text{ or } \text{FLOPs}(M), \qquad
Q(M) = \text{Accuracy}(M; \mathcal{D}_{\text{val}}).
\]

\paragraph{Instantiation.}  
The KMR tuple is
\[
\mathfrak{I}_{\text{arch}} = \big( \{k_{\text{depth}}, k_{\text{width}}, k_{\text{rank}}, k_{\text{adapter}}\}, \{T_{\text{arch}}\}, \{C\}, \{Q\} \big),
\]
capturing a variety of methods including LoRA, adapters, and low-rank reparameterizations.

\subsubsection*{Algorithm: Parameter-Efficient Fine-Tuning}
\begin{algorithm}[H]
\caption{Generic Low-Rank / Adapter Fine-Tuning}
\label{alg:arch-efficient}
\begin{algorithmic}[1]
\Require Base model $M_0$; knob settings $k$; training dataset $D$; epochs $E$
\State Configure model $M$ according to knobs ($k_{\text{depth}}, k_{\text{width}}, k_{\text{rank}}, k_{\text{adapter}}$)
\For{$e = 1$ to $E$}
    \State Forward pass through $M$
    \State Compute task loss $\mathcal{L}(M, D)$
    \State Backpropagate gradients and update trainable components (e.g., adapters, LoRA matrices)
\EndFor
\State \Return Fine-tuned, parameter-efficient model $M$
\end{algorithmic}
\end{algorithm}

\subsection{Other Transformations}
\label{sec:other-transformations}

Beyond the canonical categories, many additional efficiency techniques can be captured by the KMR framework. These include tensor decompositions, weight sharing, structured sparsity, and hybrid methods that combine multiple knobs.

\paragraph{Knobs.}  
Knobs for other transformations may include:
\[
k_{\text{tensor}}, k_{\text{share}}, k_{\text{hybrid}} \in \mathcal{K}_{\text{other}},
\]
where $k_{\text{tensor}}$ specifies ranks for tensor decompositions (e.g., CP, Tucker, or tensor-train), $k_{\text{share}}$ controls the degree of parameter sharing across layers or modules, and $k_{\text{hybrid}}$ represents combination strategies or scaling factors for multi-technique methods.

\paragraph{Rule.}  
Each transformation is implemented as a deterministic rule
\[
T_{\text{other}}(M, k) \mapsto M',
\]
for example:
\begin{itemize}
    \item Applying low-rank tensor decomposition to weight tensors,
    \item Sharing parameters across layers or blocks according to $k_{\text{share}}$,
    \item Composing multiple efficiency techniques using a hybrid schedule.
\end{itemize}

\paragraph{Meters.}  
\[
C(M) = \text{FLOPs}(M), \text{Memory}(M), \text{ParamCount}(M), \qquad
Q(M) = \text{Accuracy}(M;\mathcal{D}_{\text{val}}) \text{ or other task metrics}.
\]

\paragraph{Instantiation.}  
The KMR tuple is
\[
\mathfrak{I}_{\text{other}} = \big( \{k_{\text{tensor}}, k_{\text{share}}, k_{\text{hybrid}}\}, \{T_{\text{other}}\}, \{C\}, \{Q\} \big),
\]
demonstrating that even non-standard or composite methods can be represented as knob–rule–meter triples.

\subsubsection*{Algorithm: General Other Transformation}
\begin{algorithm}[H]
\caption{Generic Tensor / Sharing / Hybrid Transformation}
\label{alg:other-transformations}
\begin{algorithmic}[1]
\Require Model $M_0$; knob settings $k$; dataset $D$; max iterations $N$
\State $M \gets M_0$, $\text{iter} \gets 0$
\While{$\text{iter} < N$}
    \State Select transformation $T \in \{T_{\text{tensor}}, T_{\text{share}}, T_{\text{hybrid}}\}$
    \State Apply rule: $M' \gets T(M, k)$
    \State Optionally fine-tune $M'$ on $D$
    \State Update meters $C(M'), Q(M')$
    \State $M \gets M'$, $\text{iter} \gets \text{iter} + 1$
\EndWhile
\State \Return Transformed model $M$
\end{algorithmic}
\end{algorithm}

\section{Composing KMR Methods}
\label{sec:composition}

One of the main advantages of the KMR framework is its modularity: multiple efficiency techniques can be composed into a single pipeline while maintaining a unified representation of knobs, rules, and meters. Formally, let
\[
\mathfrak{I}_1, \mathfrak{I}_2, \dots, \mathfrak{I}_n
\]
be KMR instantiations corresponding to different methods (e.g., pruning, quantization, distillation, adapters).

\paragraph{Combined Knobs and Rules.}  
The combined system has:
\[
\mathcal{K}_{\text{combined}} = \bigcup_{i=1}^{n} \mathcal{K}_i, \qquad
\mathcal{T}_{\text{combined}} = \bigcup_{i=1}^{n} \mathcal{T}_i,
\]
where each knob $k \in \mathcal{K}_{\text{combined}}$ is associated with its corresponding transformation rule $T_k \in \mathcal{T}_{\text{combined}}$.

\paragraph{Combined Meters.}  
Meters track both resource usage and model quality across all transformations:
\[
C_{\text{combined}}(M) = \text{AggregateCost}(M), \qquad
Q_{\text{combined}}(M) = \text{AggregateQuality}(M),
\]
where $\text{AggregateCost}$ and $\text{AggregateQuality}$ may consider weighted sums of individual meters or task-specific priorities.

\paragraph{Composed Budgeted-KMR Algorithm.}  
The iterative Budgeted-KMR algorithm (\autoref{alg:budgeted-kmr}) can be applied directly to the combined instantiation:
\begin{enumerate}
    \item Select the next knob $k \in \mathcal{K}_{\text{combined}}$ according to a policy $\pi$.
    \item Apply its rule $T_k$ to update the model.
    \item Evaluate meters $(C_{\text{combined}}, Q_{\text{combined}})$.
    \item Continue until the cost budget is satisfied or no further improvements are possible.
\end{enumerate}

\paragraph{Flexibility and Scheduling.}  
- Techniques can be applied sequentially (e.g., prune $\rightarrow$ quantize $\rightarrow$ distill) or interleaved based on policy.  
- Knobs may be adjusted dynamically depending on intermediate meter readings, allowing adaptive allocation of resources.  
- Hybrid strategies can exploit synergies between methods, e.g., pruning followed by low-rank decomposition to further reduce memory footprint.

This compositional view demonstrates that the KMR framework naturally supports \textbf{multi-technique pipelines}, providing a unified, mathematically precise lens for reasoning about complex model efficiency strategies.

\begin{algorithm}[t]
\caption{Composed Budgeted-KMR for Multiple Instantiations}
\label{alg:composed-budgeted-kmr}
\begin{algorithmic}[1]
\Require Initial model $M_0$; budget $B$; combined knob set $\mathcal{K}_{\text{combined}}$; combined rule set $\mathcal{T}_{\text{combined}}$; meters $(C_{\text{combined}}, Q_{\text{combined}})$; policy $\pi$; dataset $D$; max iterations $N$
\State $M \gets M_0$, $\text{iter} \gets 0$
\While{$C_{\text{combined}}(M) > B$ \textbf{and} $\text{iter} < N$}
    \State $(k, v) \gets \pi(M, C_{\text{combined}}(M), Q_{\text{combined}}(M))$ \Comment{select knob/value}
    \State $T_k \gets \text{GetRule}(k, \mathcal{T}_{\text{combined}})$
    \State $M' \gets T_k(M, k, v)$
    \If{$C_{\text{combined}}(M') \ge C_{\text{combined}}(M)$} \Comment{ensure progress}
        \State \textbf{break} \Comment{no further cost reduction possible}
    \EndIf
    \State $M \gets M'$
    \State $M \gets \textsc{FineTune}(M, D)$ \Comment{optional, e.g., KD, QAT, adapters}
    \State $\text{iter} \gets \text{iter} + 1$
\EndWhile
\If{$C_{\text{combined}}(M) > B$}
    \State \textbf{return} \textsc{Failure} \Comment{budget not achievable}
\EndIf
\State \Return $M$
\end{algorithmic}
\end{algorithm}

\section{Discussion}
\label{sec:discussion}

The Knob–Meter–Rule (KMR) framework provides a unified lens through which to view a wide variety of model efficiency techniques. By formalizing knobs, rules, and meters, we are able to represent seemingly disparate methods—such as pruning, quantization, knowledge distillation, and parameter-efficient architectures—within a single, coherent formalism. This abstraction highlights the underlying similarities between methods that are often treated separately in the literature.

A key benefit of this formalism is its modularity. Each method can be expressed as a specific instantiation of KMR, and multiple instantiations can be combined seamlessly. This composability allows the construction of hybrid pipelines, in which, for example, pruning is followed by low-rank decomposition or quantization, all within a single framework. Moreover, the framework naturally accommodates flexible policies for knob selection, ranging from simple greedy heuristics to adaptive, learned controllers, enabling dynamic and context-aware optimization of model efficiency.

Despite these advantages, there are some limitations to consider. The abstraction deliberately omits implementation-specific details such as optimization dynamics, hardware-specific performance characteristics, or convergence behavior. Additionally, the choice of knob granularity can significantly influence the expressiveness and practicality of the framework: knobs that are too coarse may fail to capture useful degrees of freedom, while overly fine knobs may increase computational complexity or decision space size. Finally, the metrics used for quality and cost may be task- or hardware-dependent, potentially complicating comparisons across different settings.

Looking forward, the KMR framework opens several avenues for further research. One promising direction is the development of automated policy learning strategies, leveraging reinforcement learning or meta-learning to select knobs optimally. Another is the design of composite or hybrid metrics that better capture trade-offs between computation, memory, latency, and performance. Additionally, applying KMR in dynamic or streaming contexts could allow models to adapt their efficiency strategies in real time. Finally, while the current work focuses on operational definitions and instantiations, the framework could serve as a foundation for more formal theoretical analyses of cost-quality trade-offs.

In summary, the KMR framework provides both a mathematically precise and practically flexible approach to model efficiency. It unifies a broad spectrum of techniques, facilitates compositional pipelines, and lays the groundwork for future theoretical and applied studies in resource-constrained machine learning.
\section{Conclusion}
\label{sec:conclusion}

In this work, we introduced the Knob–Meter–Rule (KMR) framework, a unified formalism for representing and reasoning about model efficiency techniques. By abstracting diverse methods—including pruning, quantization, knowledge distillation, and parameter-efficient architectures—into a consistent set of knobs, rules, and meters, we provide a mathematically precise and modular lens through which efficiency methods can be analyzed, compared, and combined.

The \textsc{Budgeted-KMR} algorithm demonstrates how these instantiations can be applied iteratively to meet resource budgets while preserving model quality. Importantly, the framework supports composition of multiple techniques, allowing hybrid pipelines that adaptively exploit synergies between methods. This composability, together with the flexibility in knob-selection policies, highlights KMR’s practical applicability across a range of tasks and deployment scenarios.

Beyond formal representation, KMR lays the groundwork for future research. Automated policy learning, dynamic adaptation of knobs in online settings, and the exploration of composite cost-quality metrics are natural extensions that could further enhance the efficiency and effectiveness of deep learning models. In essence, the framework bridges the gap between theoretical abstraction and practical deployment, offering a structured approach to understanding and designing model efficiency strategies.

Overall, the KMR framework not only unifies existing techniques but also provides a foundation for future innovations in resource-constrained machine learning, fostering both theoretical insight and practical impact.

\bibliographystyle{ieeetr}

\bibliography{references}
\end{document}